\newcommand{\R}{\mathbb{R}}
\let\vec\mathbf
\newcommand{\Cref}[1]{Figure~\ref{#1}}
\newtheorem{theorem}{Theorem}
\newtheorem{definition}{Definition}
\newtheorem{proposition}{Proposition}
\begin{document}

\title{DCSVM: Fast Multi-class Classification using Support Vector Machines}
\author{Duleep Rathgamage Don}
\address{Department
of Mathematical Sciences\\Georgia Southern University\\Statesboro,
GA 30460}
\email{dr04108@georgiasouthern.edu}
\author{Ionut E. Iacob}
\address{Department
of Mathematical Sciences\\Georgia Southern University\\Statesboro,
GA 30460}
\email{ieiacob@georgiasouthern.edu}

\keywords{multiclass classification, SVM, divide and conquer.}
\subjclass[2010]{Primary: 62H30; Secondary: 68T10. }

\maketitle

\begin{abstract}
  We present DCSVM, an efficient algorithm for multi-class classification using Support Vector Machines. DCSVM is a divide and conquer algorithm which relies on data sparsity in high dimensional space and performs a smart partitioning of the whole training data set into disjoint subsets that are easily separable. A single prediction performed between two partitions eliminates at once one or more classes in one partition, leaving only a reduced number of candidate classes for subsequent steps. The algorithm continues recursively, reducing the number of classes at each step, until a final binary decision is made between the last two classes left in the competition. In the best case scenario, our algorithm makes a final decision between $k$ classes in $O(\log k)$ decision steps and in the worst case scenario DCSVM makes a final decision in $k-1$ steps, which is not worse than the existent techniques.
\end{abstract}



\section{Introduction}
The curse of dimensionality refers to various phenomena that arise when analyzing and organizing data in high-dimensional spaces (often with hundreds or thousands of dimensions) that do not occur in low-dimensional settings such as the three-dimensional physical space of everyday experience. The expression was coined by Richard E. Bellman in a highly acclaimed article considering problems in dynamic optimization \cite{bellman:57:dynamic,bellman:03:dynamic}. In essence, as dimensionality increases, the volume of the space increases rapidly, and the available data become sparser and sparser. In general, this sparsity is problematic for any method that requires statistical significance. In order to obtain a statistically sound and reliable result, the amount of data needed to support the result often grows exponentially with the dimensionality, which would prevent common data processing techniques from being efficient.
\begin{figure}[ht]
\centering
\includegraphics[scale=0.65]{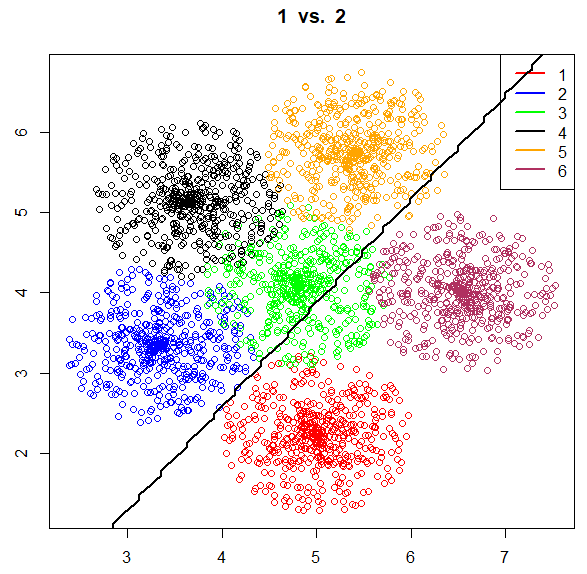}
\caption{Binary SVM classifier for classes 1 and 2 out of a dataset of six classes}\label{fig:artificial1}
\end{figure}

Since its introduction, the Support Vector Machines (SVM) \cite{cortes:95:support-vectornetworks} has quickly become a popular tool for classification which has attracted a lot of interest in the machine learning community. However, SVM is primarily a binary classification tool. The multiclass classification with SVM is still an ongoing research problem (see, for example, \cite{bishwas:17:allpair, xu:17:multi, perez:18:mc2esvm, palacios:18:multiclass} for some recent work). We present an SVM-based multi-class classification method that exploits the curse of dimensionality to efficiently perform classification of highly dimensional data.

The Divide and Conquer SVM (DCSVM) algorithm's idea is based on the following simple observation, best described using the example in \Cref{fig:artificial1}. The figure shows 6 classes (1 -- red, 2 -- blue, 3 -- green, 4 -- black, 5 -- orange, and 6 -- maroon) of two-dimensional points and a linear SVM separation of classes 1 and 2 (the line that separates the points in these classes). It happens that the SVM model for classifying classes 1 and 2 completely separates the points in classes 4 (which takes class 2's side) and 6 (which takes class 1's side). Moreover, the classifier does a relatively good job classifying most points of the class 5 as class 2 (with relatively few points classified as 1) and a poor job on classifying the points of class 3 (as the points in this class are classified about half as 1's and the other half as 2's). With DCSVM we use the SVM classifier for classes 1 and 2 for a candidate of an unknown class: if the classifier predicts 1, then we next decide between classes 1, 6, 3, and 5; if the classifier predicts class 2, then we next decide between classes 2, 4, 3, and 5. Notice that in either case one or more classes are eliminated, and we are left to predict a fewer number of classes. That is, a multi-class classification problem of a smaller size (less classes). The algorithm then proceeds recursively on the smaller problem. In the best case scenario at each step half of the $k$ classes will be eliminated and the algorithm will finish in $\lceil\log k\rceil$ steps. Notice that, in the above scenario, classes 2 and 4 are completely separated from classes 1 and 6, whereas classes 3 and 5 are not clearly on one side or the other of the separation line. For this reason, classes 3 and 5 are part of the next decision step, regardless of the prediction of the first classifier.

However, there is a significant difference between classes 3 and 5. While class 3 is almost divided in half by the separation line, class 5 can be predicted as ``2" with a relatively small error. In DCSVM we use a threshold value $\theta$ to indicate the maximum classification error accepted in order to consider a class on one side or the other of a separation line. For instance, let us consider that only 2\% of the points of class 5 are on the same side as class 1. With the threshold value set to 0.02, DCSVM will separate classes 1, 3, and 6 (when 1 is predicted) from classes 2, 3, 4, and 5 (when 2 is predicted). A higher threshold value will produce a better separation of classes (less overlapping) and less classes to process in the subsequent steps. This comes at the price of possibly sacrificing the accuracy of the final prediction.

Clearly, the method presented in the example above is suitable for multiclass classification using a binary classifier, in general. Our choice of using SVM is based on the SVM algorithm's remarkable power in producing accurate binary classification.

The content of this article is organized as follows. We give a brief description of binary classification with SVM and related work on using SVM for multi-class classification in Section~\ref{sec:preliminaries}. DCSVM is described in detail in Section~\ref{sec:dcsvm} and experimental results (including performance comparisons with one-versus-one approach) are given in Section~\ref{sec:experimental}. We conclude in Section~\ref{sec:conclusion}.

\section{Preliminaries and related work}\label{sec:preliminaries}

Support Vector Machines (SVM) \cite{cortes:95:support-vectornetworks} was primary developed as a tool for the binary classification problem by finding a separation hyperplane for the classes in feature space. If such a plane cannot be find, the ``separating plane" requirement is softened and a maximal margin separation is produced instead. Formally, the problem of finding a maximal margin separation can be stated as a quadratic optimization problem. Given a set of $l$ training vectors $\vec{x}_i\in\R^d$, $i = 1\dots l$ with labels $y_i\in\{-1,1\}$ and a feature space projection $\Phi : \R^d\rightarrow H$, the SVM method consists in finding the solution of the following:
\begin{eqnarray*}
  \min_{\vec{w}\in\R^d, b\in\R,\xi_i\in\R} &\ \ \ \ & \frac{1}{2}\vec{w}^T\vec{w} + C\sum_{i=1}^l\xi_i \\
  \mbox{subject to } & & y_i\left(\left\langle\vec{w},\Phi(\vec{x}_i)\right\rangle + b\right) \ge 1 - \xi_i \\
  & & \xi_i \ge 0,\ i=1\dots l
\end{eqnarray*}
where $\vec{w}\in\R^d$ is the weights vector and $C\in\R^+$ is a cost regularization constant. The corresponding decision function is:
$$f(\vec{x}) = sign\left(\left\langle\vec{w},\Phi(\vec{x})\right\rangle + b\right)$$
An interesting property of the method is that the dot product can be represented by a kernel function:
$$k(\vec{x},\vec{x}') = \left\langle\Phi(\vec{x}),\Phi(\vec{x}')\right\rangle$$
which is computationally much less expensive than actually projecting $\vec{x}$ and $\vec{x}'$ into the feature space $H$.

In the case of multiple classes, the problem formulation becomes more complicated and inherently more difficult to address. Given a set of $l$ training vectors $\vec{x}_i\in\R^d$, $i = 1\dots l$ with labels $y_i\in\{1, \dots, k\}$, one must find a way to distinguish between $k$ classes.

Several approaches were proposed, which can be grouped into direct methods (a single optimization problem formulation for multi-class classification) and indirect methods (using multiple binary classifiers to produce multi-class classification). Many of the indirect methods were introduced, in fact, as methods for multi-class classification using binary classifiers, in general. They are not limited to the SVM method.

A comparison \cite{hsu:02:comparison} of these methods of multi-class classification using binary SVM classifiers shows that one-versus-one method and its DAG improvement are more suitable for practical use.

\subsection{Direct formulation of multi-class classification}
Direct formulations to distinguish between $k$ classes in a single optimization problem were given in \cite{vapnik:98:statistical, bredensteiner:99:multicategory, weston:99:multi, crammer:02:algorithmic} or, more recently, in \cite{he:12:simplified, xu:17:multi}. Each of these formulations has a single objective function for training all $k$-binary SVMs simultaneously and maximize the margins from each class to the remaining ones. The decision function then chooses the ``best classified" class.

For instance, Crammer et al. in \cite{crammer:02:algorithmic} solve the following optimization problem for $k$ classes:
\begin{eqnarray}\label{eq:multiclasscramer}
\nonumber  \min_{\vec{w}_m\in\R^d, \xi_i\in\R} &\ \ \ \ & \frac{1}{2}\sum_{m=1}^k\vec{w}_m^T\vec{w}_m + C\sum_{i=1}^l\xi_i \\
  \mbox{subject to } & & \left\langle\vec{w}_{y_i},\Phi(\vec{x}_i)\right\rangle - \left\langle\vec{w}_t,\Phi(\vec{x}_i)\right\rangle \ge 1 -\delta_{y_i,t} - \xi_i \\
\nonumber  & & \xi_i \ge 0,\ i=1\dots l,\ \ t = 1\dots k
\end{eqnarray}
where $\delta_{i,j}$ is the Kronecker delta function. The corresponding decision function is:
$$argmax_mf_m(\vec{x}) = argmax_m\left\langle\vec{w}_m,\Phi(\vec{x})\right\rangle$$
The original formulation addresses the classification without taking into account the bias terms $b_i$ (for each of the $l$ classes). These can be easily included in the formulation using additional constraints (see, for instance, \cite{hsu:02:comparison}). Crammer's formulation is among the most compact optimization problem formulations for multi-class classification problem.

A common issue of the single optimization problem formulations for multi-class classification is the large number of variables involved. For instance, \eqref{eq:multiclasscramer}, although a compact formulation, includes $l\times k$ variables (not taking into account $b_i$'s, if included), which yields large computation complexity. In \cite{he:12:simplified}, Crammer's formulation is extended by relaxing its constraints and subsequently solving a single $l$-variable quadratic programming problem for multi-class classification.

\subsection{One-versus-rest approach}
The one-versus-rest approach \cite{vapnik:98:statistical, bottou:94:comparison, szedmak:04:multiclass} is an indirect method relying on binary classifiers as follows. For each class $t \in \{1, \dots, k\}$ a binary classifier $f_t$ is created between class $t$ (as positive examples in the training set) and all the other classes, $\{1, \dots, t-1, t+1, \dots, k\}$ (all as negative examples in the training set). The corresponding decision function is then:
$$f(\vec{x}) = argmax_{1\le t\le k}f_t(\vec{x})$$
That is, the class label is determined by the binary classifier that gives maximum output value (the winner among all classifiers). A well-known shortcoming of the one-versus-rest approach is the highly imbalanced training set for each binary classifier (the more classes, the bigger the imbalance). Assuming equal number of training examples for all classes, the ratio of positive to negative examples for each binary classifier is $1 / (k-1)$. The symmetry of the original problem is lost and the classification results may be dramatically affected (especially for sparse classes).

\subsection{One-versus-one approach}
The one-versus-one approach (\cite{knerr:90:single, friedman:96:another, krebel:99:pairwise, park:07:efficient} or the improvement by Platt et al. \cite{platt:99:dags}) aims to remove the imbalance problem of the one-versus-rest method by training binary classifiers strictly with data in the two classifier's classes. For each pair of classes, $s,t\in\{1, \dots, k\}$ a binary classifier $f_{s,t}$ is created. This classifier is trained using all data in class $s$ as positive examples and all data in class $t$ as negative examples, hence all balanced binary classifiers. Each binary classifier is the result of a smaller optimization problem, at the cost of producing $k(k-1)/2$ classifiers.  The corresponding decision function is based on majority voting. All classifiers $f_{s,t}$ are used on an input data item $\vec{x}$ and each class appears in exactly $k-1$ classifiers, hence an opportunity for up to $k-1$ votes out of the $k(k-1)/2$ binary classification rounds. The class with the majority of votes is the winner.

An improvement on the number of voting rounds was proposed by Platt et al. in \cite{platt:99:dags}. Their method, called Directed Acyclic Graph SVM (DAGSVM),  forms a decision-graph structure for the testing phase and it takes exactly $k-1$ individual voting rounds to decide the label of a data item $\vec{x}$. In a nutshell, DAGSVM uses one binary classifier at the time and subsequently removes the losing class from all subsequent classifications. There is no particular criterion on the order of using each binary classifier in this process.

\section{Divide and Conquer SVM (DCSVM)}\label{sec:dcsvm}
As noted in the introduction and illustrated in \Cref{fig:artificial1}, the key idea is that any binary classifier may, in practice, separate more than two classes. Which raises a natural question: which classes are separated (and with what accuracy) by each binary classifier? DCSVM combines the one-versus-one method's simplicity of producing balanced, fast binary classifiers with the classification speed of the DAGSVM's decision graph. The essential difference consists of producing the most efficient decision tree capable of delivering the decision in at most $k-1$ steps in the worst case scenario, or $O(\log k)$ steps in the best case scenario.

\subsection{DCSVM training}
Let us introduce some notations and then we will proceed to the formal description of the algorithm. Given a data set $D$ of $k$ classes (labels) where to each data item $\vec{x}\in D$ has been assigned a label $l\in\{1,\dots, k\}$, we want to construct a decision function $dcsvm : D \rightarrow \{1,\dots, k\}$ so that $dcsvm(\vec{x}) = l$, where $l$ is the corresponding label of $\vec{x}\in D$. As usual, by considering a split $D = R\cup T$ of the data set $D$ into two disjoint sets $R$ (the training set) and $T$ (the test set), we will be using the data in $R$ to construct our decision function $dcsvm()$ and then the data in $T$ to measure its accuracy. Furthermore, we consider $R = R_1\cup R_2\cup\dots\cup R_k$ as an union of disjoint sets $R_l$, where each $x\in R_l$ has label $l$, $l = 1,\dots, k$. (Similarly, we consider $T = T_1\cup T_2\cup\dots\cup T_k$ as a union of disjoint sets $T_l$, where each $x\in T_l$ has label $l$, $l = 1,\dots, k$.)

Let $svm_{i,j} : D\rightarrow\{i,j\}$, be a SVM binary classifier created using the training set $R_i\cup R_j$, $i < j$ and $i = 1,\dots,k-1, j = 2,\dots,k$. There are $k(k-1)/2$ such one-versus-one binary classifiers. We must clearly specify that the $svm()$ decision function we consider here is not the ideal one, but the practical one, likely affected by misclassification errors. That is, for some $\vec{x}\in R_i\cup T_i$, we may have that $svm_{i,j}(\vec{x}) = j$.

Our goal is to create the $dcsvm()$ decision function that uses a minimal number of binary decisions for $k$-classes classification, while not sacrificing the classification accuracy. We define next a few measures we use in the process of identifying the shortest path to a multi-class classification decision.

\begin{definition}[Class Predictions Likelihoods]
The class predictions likelihoods of a SVM binary classifier $svm_{i,j}(\cdot)$ for a label $l\in\{1,\dots,k\}$, denoted respectively as  $\mathcal{C}_{i,j}(l, i)$ and $\mathcal{C}_{i,j}(l, j)$,  are:
\begin{eqnarray*}
  \mathcal{C}_{i,j}(l, i) &=& \frac{|\{\vec{x}\in R_l\;|\;svm_{i,j}(\vec{x}) = i\}|}{|R_l|} \\
  \mathcal{C}_{i,j}(l, j) &=& \frac{|\{\vec{x}\in R_l\;|\;svm_{i,j}(\vec{x}) = j\}|}{|R_l|} = 1 - C_{i,j}(l,i)
\end{eqnarray*}
\end{definition}
Each class prediction likelihood represents the expected outcome likelihood for $i$ or $j$ when a binary classifier $svm_{i,j}(\cdot)$ is used for prediction on all data items in $R_l$. These likelihoods are computed for each binary classifier and each class in the training data set.

All pairs of likelihood predictions, for every binary classifier $svm_{i,j}(\cdot)$ and classes are stored in a table, as follows.
\begin{definition}[All-Predictions Table] We arrange all classes predictions likelihoods in rows (corresponding to each binary classifier $svm_{i,j}$) and columns (corresponding to each class $1,\dots,k$) to form a table $\mathcal{T}$ where each entry is given by a pair of predictions likelihoods as follows:
$$\mathcal{T}[svm_{i,j},l]=\left(\mathcal{C}_{i,j}(l, i), \mathcal{C}_{i,j}(l, j)\right)$$
\end{definition}
\Cref{fig:allpredict} shows the \emph{All-Predictions Table} computed for the \emph{glass} data set in \cite{Dua:2017}. The data set contains 6 classes, labeled as 1, 2, 3, 5, 6, and 7. Each row corresponds to a binary classifier $svm_{1,2},\cdots,svm_{6,7}$ and the columns correspond to the class labels. Each table cell contains a pair of likelihood predictions (as percentages) for the row classifier and class column. For instance, $\mathcal{C}_{1,2}(1, 1) = 100\%$, $\mathcal{C}_{1,2}(1, 2) = 0\%$ and $\mathcal{C}_{1,6}(2, 1) = 91.8\%$, $\mathcal{C}_{1,6}(2, 6) = 8.2\%$.
\begin{figure}[ht]
\centering
\includegraphics[scale=0.55]{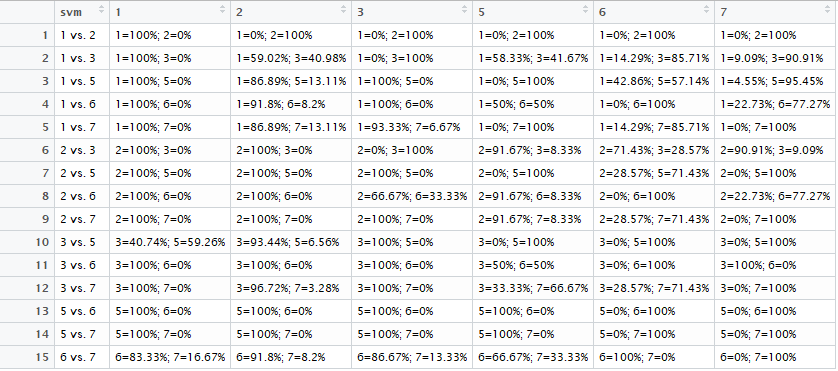}
\caption{The \emph{All-Predictions} table for the \emph{glass} data set in \cite{Dua:2017}}\label{fig:allpredict}
\end{figure}

We define next two measures for the quality of the classification of each $svm_{i,j}(\cdot)$. The purity index measures how good the binary classifier is for classifying all classes as $i$ or $j$ for a given precision threshold $\theta$. In a nutshell, a class $l$ is classified as ``definitely" $i$ by $svm_{i,j}(\cdot)$ if $\mathcal{C}_{i,j}(l, i) \ge 1-\theta$; as ``definitely" $j$ if $\mathcal{C}_{i,j}(l, j) \ge 1-\theta$; otherwise, it is classified as ``undecided" $i$ or $j$. The purity index counts how many ``undecided" decisions a binary classifier produces. The lower the index, the better the separation. The balance index measures how ``balanced" a separation is in terms of the number of classes predicted as $i$ and $j$. The larger the index, the better.
\begin{definition}[SVM Purity and Balance Indexes]\label{def:purityandbalance}
For an accuracy threshold $\theta$ of a SVM classifier $svm_{i,j}(\cdot)$, we define: \\
- the purity index, denoted as $\mathcal{P}_{i,j}(\theta)$, as:
$$\mathcal{P}_{i,j}(\theta) = \left(\sum_{l = 1}^{k}\left(\chi_\theta(\mathcal{C}_{i,j}(l, i)) + \chi_\theta(\mathcal{C}_{i,j}(l, j))\right)\right)-k$$
- the balance index, denoted as $\mathcal{B}_{i,j}(\theta)$, as:
$$\mathcal{B}_{i,j}(\theta) = min\left(k-\sum_{l = 1}^{k}\chi_\theta(\mathcal{C}_{i,j}(l, j)), k-\sum_{l = 1}^{k}\chi_\theta(\mathcal{C}_{i,j}(l, i))\right)$$
where $\chi_\theta$ is the step function:
$$\chi_\theta(x) = \left\{\begin{array}{ll}1\ \ \ \ &\mbox{if }x > \theta\\0 & \mbox{if }x \le \theta\end{array}\right.$$
\end{definition}
For instance, the purity index for row $svm_{1,6}$ and threshold $\theta = 0.05$ in \Cref{fig:allpredict} is:
$$\mathcal{P}_{1,6}(0.05) = ((1+0) + (1+1) + (1+0) + (1+1) + (0+1) + (1+1)) - 6 = 3$$
and indicates that 3 of the classes (namely 2, 5, and 7) are undecided when the required precision is at least $\theta = 5\%$.

For accuracy threshold $\theta = 0.05$, the balance index for row $svm_{1,2}$ in \Cref{fig:allpredict} is $\mathcal{B}_{1,2} = 1$ and for row $svm_{5,6}$ is $\mathcal{B}_{5,6} = 2$.

The SVM score, defined next, is a measure of the precision of the binary classifier $svm_{i,j}(\cdot)$ for classifying classes $i$ and $j$. The higher the score the better the classifier precision.
\begin{definition}[SVM Score]\label{def:score}
The score of a SVM classifier $svm_{i,j}(\cdot)$, denoted as $\mathcal{S}_{i,j}$, is
$$\mathcal{S}_{i,j} = \frac{\mathcal{C}_{i,j}(i, i) + \mathcal{C}_{i,j}(j, j)}{2}$$
\end{definition}

For instance, the table in \Cref{fig:allpredict} shows that
$$\mathcal{S}_{1,2} = \frac{\mathcal{C}_{1,2}(1, 1) + \mathcal{C}_{1,2}(2, 2)}{2} = \frac{100\%+100\%}{2} = 100\%.$$

\begin{algorithm}
    \caption{DCSVM training}
    \label{alg:dcsvm}
    \begin{algorithmic}[1] 
        \Procedure{TrainDCSVM}{$R_1,\dots,R_k,\theta$} \Comment{Creates DCSVM classifier}\\
            \textbf{Input}: $R = R_1,\dots,R_k$: data set, $\theta$: accuracy threshold\\
            \textbf{Output}: $dcsvm()$
            \State {$svm_{i,j}\gets$ train SVM with $R_i \cup R_j$, $i=1,\dots,k-1$, $j=2,\dots,k$, $i < j$}
            \State{$\mathcal{T}[svm_{i,j},l]=\left(\mathcal{C}_{i,j}(l, i), \mathcal{C}_{i,j}(l, j)\right)$, for all $svm_{i,j}$, $l = 1,\dots,k$}
            \State{//Recursively construct a binary decision tree}
            \State{//with each node associated with a $svm_{i,j}$ binary classifier}
            \State{$dcsvm\gets$ empty binary tree}
            \State{$dcsvm.root \gets $ new tree node}
            \State{\textsc{DCSVM-subtree}($dcsvm.root, \mathcal{T},\theta$)}
            \State \textbf{return} $dcsvm$\Comment{Returns the decision tree}
        \EndProcedure
    \end{algorithmic}

    \begin{algorithmic}[1] 
        \Procedure{DCSVM-subtree}{$pnode, \mathcal{T},\theta$} \Comment{Creates subtree routed at $pnode$}\\
            \textbf{Input}: $pnode$: current parent node, $\mathcal{T}$: current predictions table,  $\theta$: accuracy threshold\\
            \textbf{Output}: recursively constructs sub-tree rooted at $pnode$
            \State{$svm_{i,j} \gets $ optimal $svm$ in $\mathcal{T}$, for given $\theta$}
            \State{$pnode[svm] \gets svm_{i,j}$}
            \State{$listi\gets$ classes labeled as $i$ or undecided by $svm_{i,j}$}
            \State{$listj\gets$ classes labeled as $j$ or undecided by $svm_{i,j}$}
            \If{$length(listi) = 1$}\Comment{reached a leaf}
               \State{$pnode.leftnode\gets $ tree-node(label in $listi$)}
            \Else
               \State{$\mathcal{T}.left\gets\mathcal{T}$ minus $svm_{m,n}, m\in listj$ or $n\in listj$ rows, and columns of classes not in $listi$}
               \State{$pnode.left\gets$ new tree node}
               \State{\textsc{DCSVM-subtree}($pnode.left, \mathcal{T}.left,\theta$)}
            \EndIf
            \If{$length(listj) = 1$}\Comment{reached a leaf}
               \State{$pnode.rightnode\gets $ tree-node(label in $listj$)}
            \Else
               \State{$\mathcal{T}.right\gets\mathcal{T}$ minus $svm_{m,n}, m\in listi$ or $n\in listi$ rows, and columns of classes not in $listj$}
               \State{$pnode.right\gets$ new tree node}
               \State{\textsc{DCSVM-subtree}($pnode.right, \mathcal{T}.right,\theta$)}
            \EndIf
        \EndProcedure
    \end{algorithmic}
\end{algorithm}

Algorithm~\ref{alg:dcsvm} describes the DCSVM training and proceeds as follows. In the main procedure, \textsc{TrainDCSVM}, the SVM binary classifiers for all class pairs are trained (line 4) and the predictions likelihoods are stored in the predictions table (line 5). The decision function $dcsvm$ is created as an empty tree (line 8) and then recursively populated in the \textsc{DCSVM-subtree} procedure (line 10). The recursion procedure creates a left and/or a right node at each step (lines 12 and 19, respectively) or may stop with creating a left and/or a right label (lines 9 and 16, respectively). Each new node is associated to the binary $svm_{i,j}$ that is the decider at that node (line 5), or with a class label if an end node (lines 9 and 16).

An important part of the \textsc{DCSVM-subtree} procedure is choosing the ``optimal" $svm$ from a current predictions likelihoods table (line 4). For this purpose, we use the SVM Purity Index, Balance, and Score from Definitions \ref{def:purityandbalance} and \ref{def:score}, respectively. The order these measures are used may influence the decision tree shape and precision. If Score is used then the Purity and Balance Indexes are used to break a tie, the resulting tree favors accuracy over the speed of decisions (may yield bushier trees). If Purity and Balance Indexes are used first, then Score, if a tie, the resulting tree may be more balanced. The decision speed is favored while possibly sacrificing accuracy.

A $dcsvm$ decision tree for the \emph{glass} data set is shown in \Cref{fig:tree}. Clearly, the algorithm may produce highly unbalanced $dcsvm$ decision trees (when some classes are decided faster than others) or very balanced decision trees (when most of class labels are leaves situated at about same depth). Regardless of outcome, the following result is almost immediate.
\begin{proposition}\label{prop:worstcase}
The $dcsvm$ decision tree constructed in Algorithm~\ref{alg:dcsvm} has depth at most $k-1$.
\end{proposition}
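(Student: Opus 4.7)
The plan is to prove Proposition~\ref{prop:worstcase} by strong induction on the number $n$ of candidate classes represented in the current predictions table $\mathcal{T}$ at a node of the tree, establishing the sharper statement that the subtree rooted at such a node has depth at most $n-1$. Applying this at the root, where $n=k$, yields the proposition.

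For the base case, when $n=1$ the parent call has already emitted a leaf labeled with the unique remaining class (lines 9 or 16 of \textsc{DCSVM-subtree}), so the subtree has depth $0 = n-1$. For the inductive step, assume the claim for all values smaller than $n$ and consider a node with $n\ge 2$ candidates, associated to classifier $svm_{i,j}$. The crucial observation is that the classes $i$ and $j$ are among the current candidates, and that $\mathcal{C}_{i,j}(i,i)\ge 1-\theta$ and $\mathcal{C}_{i,j}(j,j)\ge 1-\theta$, since any reasonably trained SVM correctly classifies its own training classes for a sensible accuracy threshold $\theta$. Consequently, class $i$ is ``labeled as $i$'' rather than undecided, so $i\in listi$ but $i\notin listj$; symmetrically, $j\in listj$ but $j\notin listi$. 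Hence $|listi|\le n-1$ and $|listj|\le n-1$. The left subtree is then either a leaf (depth $0$) or is built by a recursive call whose table has at most $|listi|\le n-1$ columns, and analogously for the right subtree; by the inductive hypothesis both subtrees have depth at most $n-2$. Adding the single edge from the current node gives total depth at most $1+(n-2)=n-1$, completing the induction.

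The main subtlety, rather than any combinatorial difficulty, is justifying the step $|listi|,|listj|\le n-1$. This is not forced by the algorithm per se but follows from the implicit hypothesis that each $svm_{i,j}$ attains training accuracy at least $1-\theta$ on its own classes $i$ and $j$; under this standing assumption (which is what justifies calling $\theta$ a meaningful accuracy threshold in Definition~\ref{def:purityandbalance}), the reduction argument goes through cleanly. A naive ``binary tree with $k$ leaves has depth $\le k-1$'' shortcut would not suffice, because undecided classes are duplicated into both $listi$ and $listj$ and can appear as leaves on both sides, so the leaf count is not a priori bounded by $k$; it is essential to bound the shrinkage of the candidate set at every recursive call, which is exactly what the inductive argument above accomplishes.
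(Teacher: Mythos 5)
Your proof is correct and follows essentially the same route as the paper's: the paper likewise argues that class $i$ remains only in $listi$ and class $j$ only in $listj$, so each recursive call strictly shrinks the candidate set and the tree has depth at most $k-1$. Your version makes explicit two things the paper leaves implicit --- the formal induction on the number of remaining classes, and the standing assumption that each $svm_{i,j}$ classifies its own training classes $i$ and $j$ with accuracy at least $1-\theta$, without which an undecided $i$ or $j$ would be duplicated into both lists and the shrinkage argument (indeed, the algorithm's termination) would fail.
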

\begin{proof}
The lists of classes labeled $i$ and $j$ (lines 6, 7 in \textsc{DCSVM-subtree} procedure) contain at least one label each: $i$ or $j$, respectively. Once a class column is removed from $\mathcal{T}$ at some tree node $n$, it will not appear again in a node or leaf in the subtree rooted at that node $n$. Hence with each recursion the number of classes decreases by at least one (lines 11, 18) from $k$ to 2, ending the recursion with a left or a right label node in lines 9 or 16, respectively.
\end{proof}
Notice that a scenario where each $dcsvm$ decision tree label has depth $k$ is possible in practice: when no $svm_{i,j}$ binary classifier is a good separator for classes other than $i$ and $j$ (and therefore at each node only classes $i$ and $j$ are separated, while the other are undecided and will appear in both left and right branches). We call this the worst case scenario, for obvious reasons. The opposite case scenario is also possible in practice: each $svm_{i,j}$ separates all classes into two disjoint lists of about same lengths. The $dcsvm$ decision tree is also very balanced in this case, but a lot smaller.
\begin{proposition}
The $dcsvm$ decision tree constructed in Algorithm~\ref{alg:dcsvm} when each $svm_{i,j}$ produced balanced, disjoint separation between all classes has depth at most $\lceil\log k\rceil$.
\end{proposition}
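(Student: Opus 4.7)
The plan is to prove this by strong induction on $k$, the number of classes at a node of the decision tree, following the same structural argument used for Proposition~\ref{prop:worstcase} but sharpened by the balanced--disjoint hypothesis. Let $T(k)$ denote the maximum depth of any subtree rooted at a node whose current predictions table has $k$ columns, under the stated assumption that every chosen $svm_{i,j}$ partitions the current class set into two disjoint sublists $listi$ and $listj$ of sizes as equal as possible. I will show $T(k)\le \lceil\log k\rceil$.

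The base case $k=2$ is immediate: the chosen $svm_{i,j}$ is the only available classifier, and both $length(listi)=1$ and $length(listj)=1$, so the recursion terminates at lines~9 and~16 with two label leaves, giving depth $1=\lceil\log 2\rceil$. For the inductive step, assume the bound for all $k'<k$. At the root with $k$ classes, disjointness gives $|listi|+|listj|=k$ and balance gives $|listi|,|listj|\le\lceil k/2\rceil$. By the inductive hypothesis applied to the left and right children (each corresponding to a strictly smaller, balanced--disjoint instance by construction of $\mathcal{T}.left$ and $\mathcal{T}.right$),
\[
T(k)\le 1+\max\bigl(T(|listi|),T(|listj|)\bigr)\le 1+\bigl\lceil\log\lceil k/2\rceil\bigr\rceil.
\]

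It then remains to verify the elementary identity $1+\lceil\log\lceil k/2\rceil\rceil=\lceil\log k\rceil$ for every integer $k\ge 2$, which follows from a short case split on whether $k$ is even or odd (or equivalently, from $\lceil\log k\rceil-\lceil\log\lceil k/2\rceil\rceil=1$, checked by writing $k=2^m+r$ with $0\le r<2^m$). Chaining this identity with the displayed inequality yields $T(k)\le\lceil\log k\rceil$, completing the induction.

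I do not expect a genuine obstacle here; the only subtlety is making explicit what ``balanced, disjoint separation between all classes'' means at every recursive call, not merely at the root. Concretely, I would phrase the hypothesis as: at every invocation of \textsc{DCSVM-subtree} with $k'$ remaining classes, the selected $svm_{i,j}$ satisfies $listi\cap listj=\emptyset$ and $\bigl||listi|-|listj|\bigr|\le 1$. With this clarification the induction goes through cleanly, and the proof is essentially an arithmetic bookkeeping on ceilings rather than anything algorithmically deep.
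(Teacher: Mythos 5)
Your proposal is correct and takes essentially the same approach as the paper: the paper's proof simply observes that disjoint, balanced splits yield a balanced binary tree with $k$ leaves and hence depth at most $\lceil\log k\rceil$, while your induction on the recurrence $T(k)\le 1+T(\lceil k/2\rceil)$ is just a detailed verification of that same standard fact. Your explicit note that the balanced--disjoint hypothesis must hold at every recursive call (not only at the root) is a worthwhile clarification the paper leaves implicit.
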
\label{prop:bestcase}
\begin{proof}
Clearly, this is a case scenario where at each recursion step a node is created such that half of the classes are assigned to the left subtree and the other half to the right subtree. This produces a balanced binary tree with $k$ leaves, hence of depth at most $\lceil\log k\rceil$.
\end{proof}

\begin{algorithm}
    \caption{DCSVM classifier}
    \label{alg:dcsvmdecision}
    \begin{algorithmic}[1] 
        \Procedure{DCSVMclassify}{$dcsvm,\vec{x}$} \Comment{Produces DCSVM classification}\\
            \textbf{Input}: $dcsvm$: decision tree; $\vec{x}$: data item\\
            \textbf{Output}: Label of data item $\vec{x}$
            \State {$node\gets dcsvm.root$}
            \While{$node$ not a leaf}\Comment{Visits the decision tree nodes towards a leaf}
              \State{$svm_{i,j}\gets node[svm]$}\Comment{Retrieves the $svm$ associated to current node}
              \State{$label\gets svm_{i,j}(\vec{x})$}
              \If{$label = i$}\State{$node\gets node.left$}
              \Else\State{$node\gets node.right$}\EndIf
            \EndWhile
            \State \textbf{return} label of $node$\Comment{Returns the leaf label}
        \EndProcedure
    \end{algorithmic}
\end{algorithm}
The DCSVM classifier Algorithm~\ref{alg:dcsvmdecision} relies on the $dcsvm$ decision tree produced by Algorithm~\ref{alg:dcsvm} to take any data item $\vec{x}$ and predict its label. The algorithm starts at the decision tree root node (line 4) then each node's associated $svm$ predicts the path to follow (lines 6--12) until a leaf node is reached. The label of the leaf node is the DCSVM's prediction (line 14) for the input data item $\vec{x}$. An example of a prediction path in a $dcsvm$ tree is illustrated in \Cref{fig:tree} (b).

Propositions \ref{prop:worstcase} and \ref{prop:bestcase} directly justify the following result.
\begin{theorem}
The Algorithm~\ref{alg:dcsvmdecision} performs multi-class classification of any data item $\vec{x}$ in at most $k-1$ binary decisions steps (in the worst case scenario) and at most $\lceil\log k\rceil$ binary decision steps (in the best case scenario).
\end{theorem}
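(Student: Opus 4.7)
The plan is to reduce the theorem to the two structural propositions already in hand. The bridge I need is an observation about Algorithm~\ref{alg:dcsvmdecision}: a single call of \textsc{DCSVMclassify} traces exactly one root-to-leaf path in the $dcsvm$ tree, performing exactly one binary decision $svm_{i,j}(\vec{x})$ per internal node on that path. Thus the number of binary decision steps equals the depth of the leaf that is reached, which is bounded above by the depth of the tree itself.

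First I would make this correspondence precise. In the \textbf{while} loop (lines 5--13 of Algorithm~\ref{alg:dcsvmdecision}), each iteration reads the $svm_{i,j}$ stored at the current node, evaluates it once on $\vec{x}$, and advances $node$ to either its left or right child. Hence after $d$ iterations the pointer is at a node of depth $d$, and the loop terminates the first time $node$ is a leaf. The total count of binary decisions performed therefore equals the depth of that leaf in the tree, which is at most the depth of the decision tree produced by Algorithm~\ref{alg:dcsvm}.

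Second, I would plug in the two bounds on the depth of the $dcsvm$ tree that have already been established. Proposition~\ref{prop:worstcase} gives depth at most $k-1$ in general, yielding the worst-case bound of $k-1$ binary decision steps. Proposition~\ref{prop:bestcase} gives depth at most $\lceil\log k\rceil$ when every $svm_{i,j}$ separates the classes into two balanced, disjoint groups, yielding the best-case bound of $\lceil\log k\rceil$ steps.

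I do not expect any real obstacle: all the combinatorial content is already packaged in Propositions~\ref{prop:worstcase} and \ref{prop:bestcase}, and the only new ingredient is the identification of ``binary decision step'' in the classifier with ``edge traversed in the $dcsvm$ tree.'' The proof can therefore be kept to a couple of sentences that cite the loop structure of Algorithm~\ref{alg:dcsvmdecision} and then invoke the two propositions directly.
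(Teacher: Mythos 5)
Your proposal is correct and follows exactly the paper's route: the paper simply notes that Propositions~\ref{prop:worstcase} and \ref{prop:bestcase} directly justify the theorem, relying on the same identification of one binary decision per tree level that you spell out. Your version just makes the correspondence between loop iterations in Algorithm~\ref{alg:dcsvmdecision} and tree depth slightly more explicit, which is fine.
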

We illustrate next how the $dcsvm$ decision tree is created and how a prediction is computed using a working example.

\subsection{A working example}
We use the \emph{glass} data set \cite{Dua:2017} to illustrate DCSVM at work. This data set contains 6 classes, labeled 1, 2, 3, 5, 6, and 7 (notice there is no label 4). Consequently, $6 * (6 - 1) / 2 = 15$ binary $svm$ classifiers are created and then the ``all predictions likelihoods" table $\mathcal{T}$ is computed (\Cref{fig:allpredict}). Let us choose the accuracy threshold $\theta = 0$, for simplicity. That is, a class $l$ is classified by an $svm_{i,j}$ as only $i$ if $svm_{i,j}$ predicts that all data items in $R_l$ have class $i$; $l$ is classified as only $j$ if $svm_{i,j}$ predicts that all data items in $R_l$ have class $j$; else, $l$ is undecided and will appear on both sides of the decision tree node associated with $svm_{i,j}$.

\begin{figure}[ht]
\centering
\includegraphics[scale=0.65]{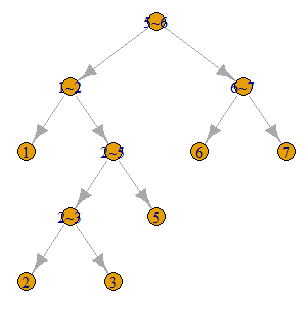}
\includegraphics[scale=0.65]{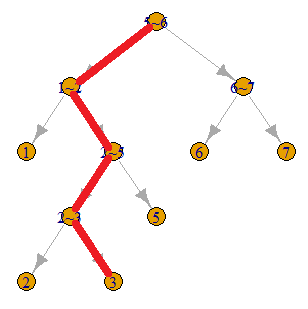}\\
(a)\hspace{5cm}(b)
\caption{DCSVM decision tree (a) with a decision process example (b), for the \emph{glass} data set}\label{fig:tree}
\end{figure}

We follow next the \textsc{DCSVM-subtree} procedure in Algorithm~\ref{alg:dcsvm} and construct the $dcsvm$ decision tree. Notice that $S_{i,j} = 100\%$ for all $svm_{i,j}$, so score does not matter for choosing the optimal $svm_{i,j}$ in line 4. The choice will be solely based on the purity and balance indexes. Table~\ref{table:measures} shows all values for these measures for the initial predictions likelihoods table.
\begin{table}
  \caption{$svm$ optimality measures for \emph{glass} data set and the initial \emph{All-Predictions} table}\label{table:measures}
  \centering
\begin{tabular}{|r|c|c|c|c|}
  \hline
   & $svm_{i,j}$ & $\mathcal{P}_{i,j}(0)$ & $\mathcal{B}_{i,j}(0)$ & $\mathcal{S}_{i,j}$ \\\hline
  1. & $svm_{1,2}$ & 0 & 1 & 100\% \\
  2. & $svm_{1,3}$ & 4 & 1 & 100\% \\
  3. & $svm_{1,5}$ & 3 & 1 & 100\% \\
  4. & $svm_{1,6}$ & 3 & 1 & 100\% \\
  5. & $svm_{1,7}$ & 3 & 1 & 100\% \\
  6. & $svm_{2,3}$ & 3 & 1 & 100\% \\
  7. & $svm_{2,5}$ & 1 & 2 & 100\% \\
  8. & $svm_{2,6}$ & 3 & 1 & 100\% \\
  9. & $svm_{2,7}$ & 2 & 1 & 100\% \\
  10. & $svm_{3,5}$ & 2 & 1 & 100\% \\
  11. & $svm_{3,6}$ & 1 & 1 & 100\% \\
  12. & $svm_{3,7}$ & 3 & 1 & 100\% \\
  13. & $svm_{5,6}$ & 0 & 2 & 100\% \\
  14. & $svm_{5,7}$ & 0 & 2 & 100\% \\
  15. & $svm_{6,7}$ & 4 & 1 & 100\% \\
  \hline
\end{tabular}
\end{table}
The table shows rows 1, 13, and 14 as candidates with minimum purity indexes. Then a tie between rows 13 and 14 as the winners among these. Row 13 comes first and hence $svm_{5,6}$ is selected as the root node. \Cref{fig:tree}~(a) shows the full decision tree, with $svm_{5,6}$ as the root node. Subsequently, $svm_{5,6}$ labels classes 1, 2, 3, and 5 as ``5" (left), and classes 6 and 7 as ``6" (right). The algorithm continues recursively with classes $\{1, 2, 3, 5\}$ to the left, and classes $\{6, 7\}$ to the right. The right branch will be completed immediately with one more tree node (for $svm_{6,7}$) and two corresponding leaf nodes (for labels 6 and 7).

For the left branch the algorithm will proceed with a reduced \emph{All-Predictions} table: rows 4, 5, 8, 9, 11, 12, 13, 14, and 15 and columns for classes 6 and 7 are removed. The optimality measures will be subsequently computed for all $svm$ and classes still in competition (1, 2, 3, and 5) in the left branch. The corresponding measures are given in Table~\ref{table:measures2} (for an easier identification, the indices in the first column are kept the same as the original indices in the \emph{All-Predict} table in \Cref{fig:allpredict}).
\begin{table}
  \caption{Optimality measures in the second step of creating the decision tree in \Cref{fig:tree} (b)}\label{table:measures2}
  \centering
\begin{tabular}{|r|c|c|c|c|}
  \hline
   & $svm_{i,j}$ & $\mathcal{P}_{i,j}(0)$ & $\mathcal{B}_{i,j}(0)$ & $\mathcal{S}_{i,j}$ \\\hline
  1. & $svm_{1,2}$ & 0 & 1 & 100\% \\
  2. & $svm_{1,3}$ & 2 & 1 & 100\% \\
  3. & $svm_{1,5}$ & 1 & 1 & 100\% \\
  6. & $svm_{2,3}$ & 1 & 1 & 100\% \\
  7. & $svm_{2,5}$ & 0 & 1 & 100\% \\
  10. & $svm_{3,5}$ & 2 & 1 & 100\% \\
  \hline
\end{tabular}
\end{table}
There is a tie between $svm_{1,2}$ and $svm_{2,5}$, and $svm_{1,2}$ is being used first. A node is consequently created, with a leaf as a left child. The rest of the tree is subsequently created in the same manner.

\section{Experimental results}\label{sec:experimental}
\begin{table}[htbp]
\caption{Data sets}\label{tab:datasets}
\begin{center}
\begin{tabular}{|r|l|c||r|l|c|}
  \hline
  No & Dataset & Classes & No & Dataset & Classes \\\hline
  1. & artificial & 6 & 8. & covertype & 7 \\
  2. & iris & 3 & 9. & svmguide4 & 6 \\
  3. & segmentation & 7 & 10. & vowel & 11 \\
  4. & heart & 5 & 11. & usps & 10 \\
  5. & wine & 3 & 12. & letter & 26 \\
  6. & wine-quality & 6 & 13. & poker & 10 \\
  7. & glass & 6 & 14. & sensorless & 11 \\
  \hline
\end{tabular}
\end{center}
\end{table}
\begin{figure}[ht]
\centering
\includegraphics[scale=0.5]{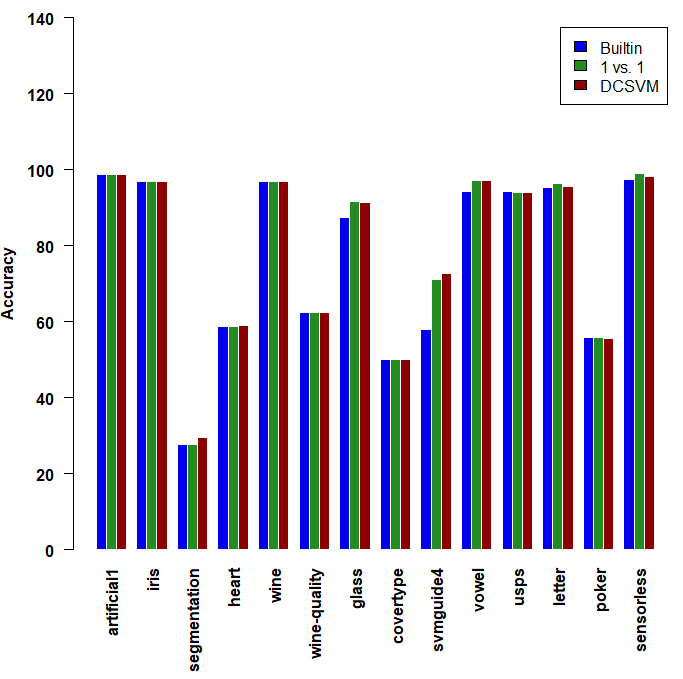}
\caption{Multi-class prediction accuracy comparison: built-in SVM, one-versus-one, and DCSVM}\label{fig:barplot1}
\end{figure}

We implemented DCSVM in R v3.4.3 using the e1071 library \cite{dimitriadou:18:e1071}, running on Windows 10, 64-bit Intel Core i7 CPU @3.40GHz, 16GB RAM. For testing, we used 14 data sets from the UCI repository \cite{Dua:2017} (as listed in Table~\ref{tab:datasets}).
We performed three sets of experiments: (i) multi-class prediction accuracy comparison, (ii) prediction performance in terms of speed (time and number of binary decisions) and resources (number of support vectors), and (iii) DCSVM performance comparisons for different data sets and accuracy threshold parameter values. For the first set of experiments we compared three multi-class predictors: the built-in multi-class SVM (from the \emph{e1071} library), our R implementation of one-versus-one, and the R implementation of DCSVM. For a fair comparison, in the second set of experiments we compared only the R implementations of one-versus-one and DCSVM. The built-in multi-class SVM would benefit of the inherent speed of native code it relies on. Finally, the third set of experiments focused on the DCSVM's R implementation performance and fine tuning.
\subsection{Accuracies comparison: built-in multi-class SVM, one-versus-one, and DCSVM}
The main goal of DCSVM is to improve multi-class prediction performance while not sacrificing the prediction accuracy. The first experimental results compare multi-class prediction accuracy of: (i) built-in SVM multi-class prediction (in the \emph{e1071} package), (ii) one-versus-one implementation in R, and (iii) DCSVM implementation in R. For the experiment, we used cross-validation with 80\% data for training and 20\% for testing, for each data set. We ran 10 trials and averaged the results. The results are displayed in \Cref{fig:barplot1} and show no significant differences between the three methods.
\subsection{Prediction performance comparison}
\begin{figure}[ht]
\centering
\includegraphics[scale=0.3]{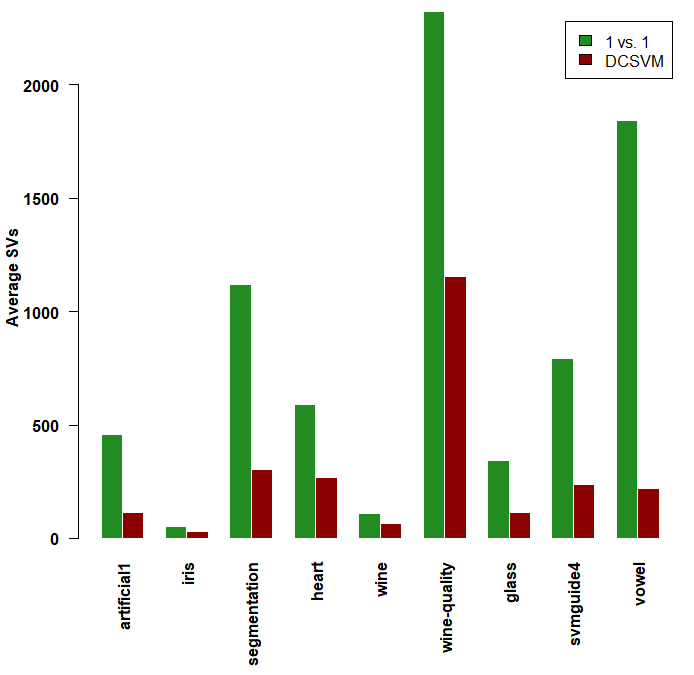}
%
\includegraphics[scale=0.3]{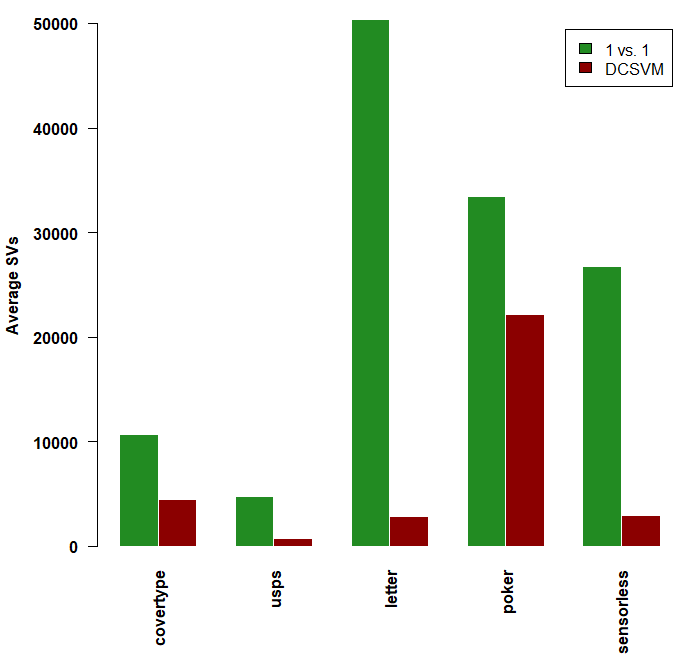}
\caption{Average number of Support Vectors for multi-class predictions}\label{fig:barplot2-2}
\end{figure}
\begin{figure}[ht]
\centering
\includegraphics[scale=0.3]{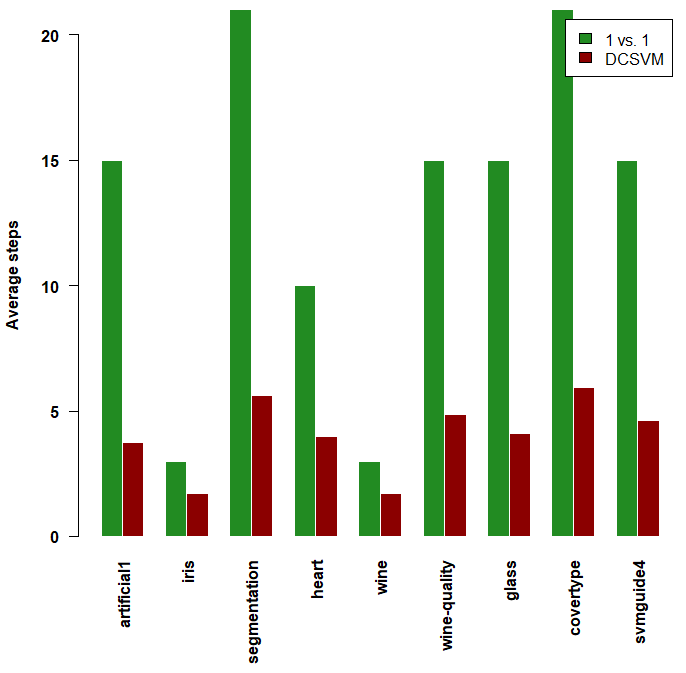}
%
\includegraphics[scale=0.3]{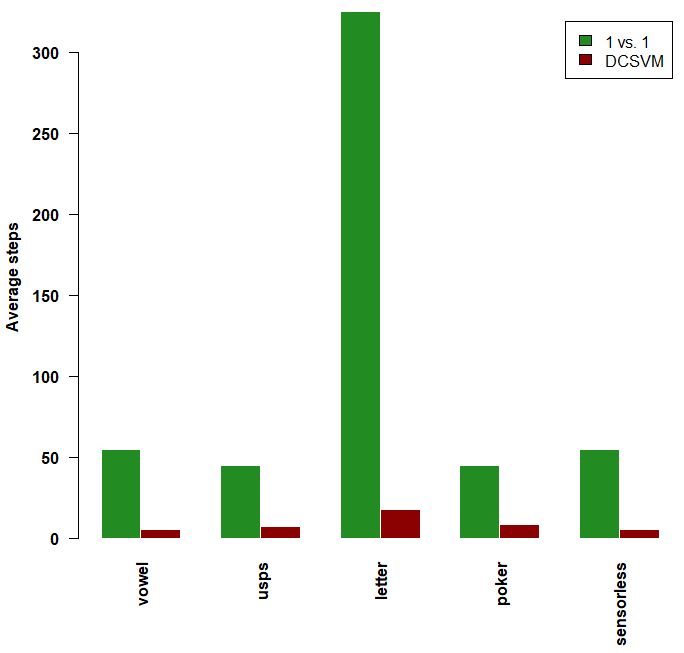}
\caption{Average number of binary decisions for multi-class predictions}\label{fig:barplot3-2}
\end{figure}
\begin{figure}[ht]
\centering
\includegraphics[scale=0.3]{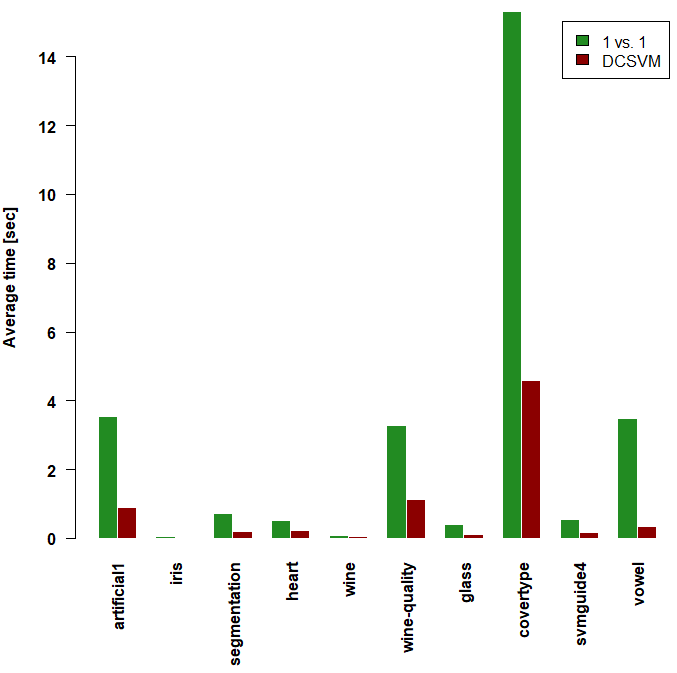}
%
\includegraphics[scale=0.3]{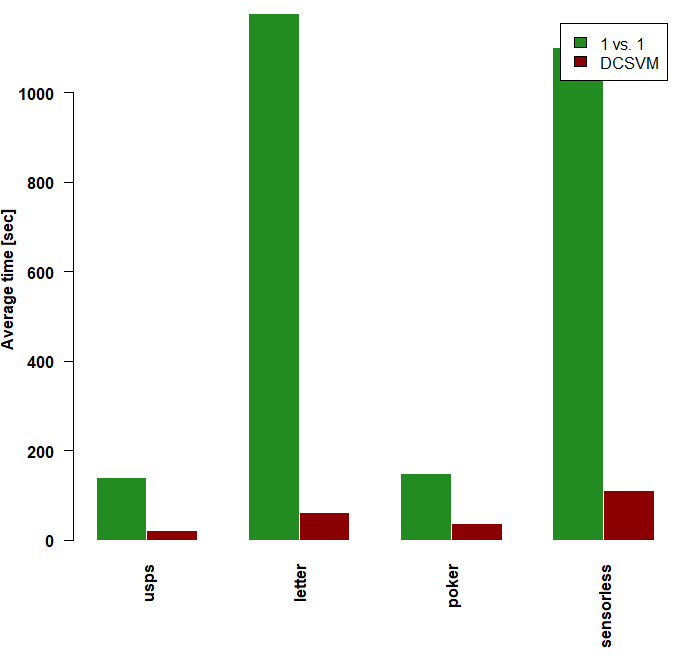}
\caption{Average prediction times for multi-class predictions}\label{fig:barplot4-2}
\end{figure}
For this purpose, we compared the R implementations of one-versus-one method and DCSVM.
We analyzed prediction performance in three aspects: the average number of support vectors, the average number of binary decisions, and time. The number of support vectors used was computed by summing up all support vectors from every binary decider, over all steps of binary decisions until the multi-class prediction was achieved. The number of such support vectors is clearly proportional not only to the number of decision steps (which are illustrated separately), but also to the configuration of data separated by each binary classifier. The corresponding performance results are presented in \Cref{fig:barplot2-2}, \Cref{fig:barplot3-2}, and \Cref{fig:barplot4-2}, respectively. Due to large variations in size between the data sets we used, we split the data sets into two size-balanced groups and displayed each graph side-by-side for each group. DCSVM significantly outperforms one-versus-one, clearly being much less computationally intensive (number of support vectors for prediction) and faster (number of binary decisions and prediction times).

From the first two sets of experimental results we can conclude already that DCSVM achieved the goal of being a faster multi-class predictor without sacrificing prediction accuracy.

\subsection{DCSVM performance fine tuning}
In this set of experiments we analyze in close detail DCSVM's performance in terms of the accuracy threshold parameter.
\begin{figure}[ht]
\centering
\includegraphics[scale=0.35]{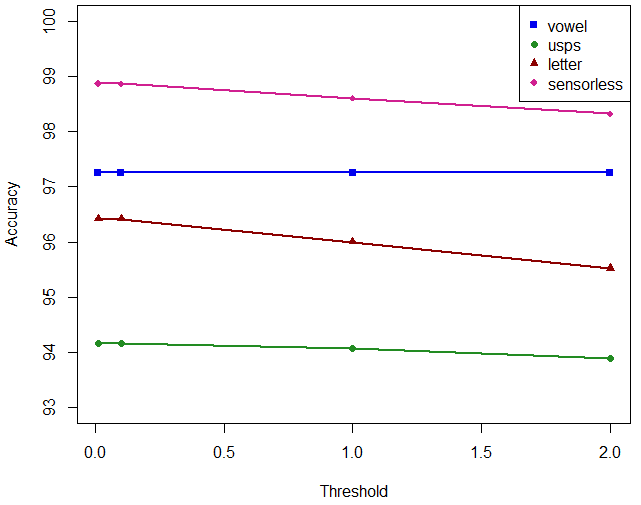}
\includegraphics[scale=0.35]{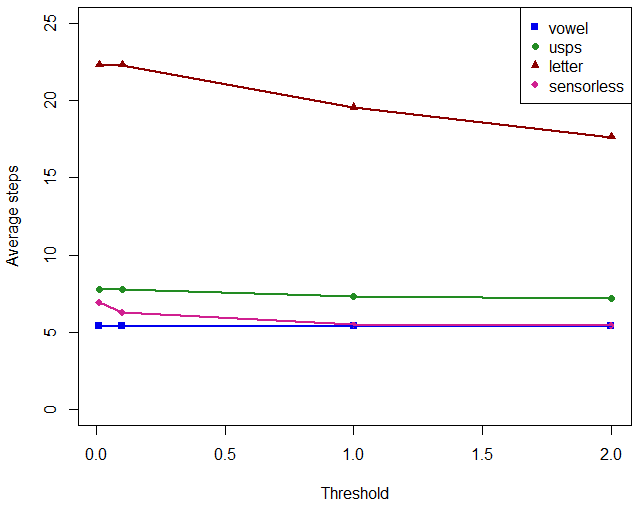}
\caption{Accuracy and the average number of prediction steps for different thresholds}\label{fig:accuracyvsthshold}
\end{figure}
\Cref{fig:accuracyvsthshold} shows the trade-off between accuracy (left) and the average prediction steps (right) with various threshold values. Clearly, the accuracy threshold parameter permits a trade-off between accuracy and speed. However, this is largely data dependent. The more separable the data is, the less influence the threshold has on speed. For less separable data (such as the \emph{letter} data set), fine adjustment of the threshold permits trade-off between prediction accuracy and prediction speed. This is not the case for the \emph{vowel} data set, which is highly separable: changes in the threshold influence neither the accuracy of prediction nor the average number of prediction steps.
\begin{figure}[ht]
\centering
\includegraphics[scale=0.35]{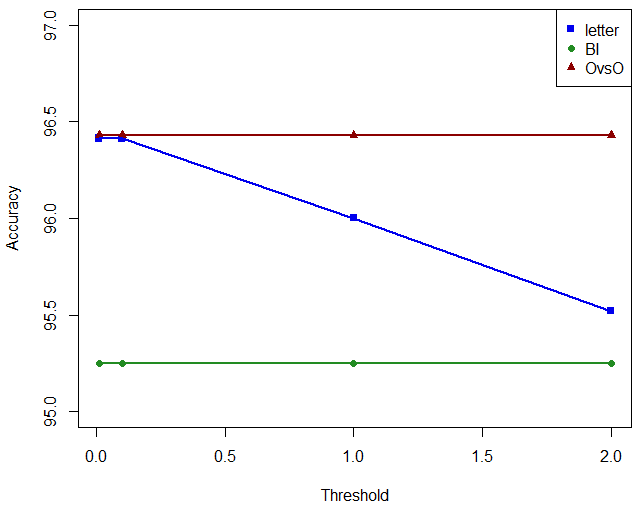}
\caption{Accuracy for predicting ``letter" with each method, for different split thresholds}\label{fig:accbymethvsthshold}
\end{figure}

\Cref{fig:accbymethvsthshold} shows how DCSVM accuracy compares to other multi-class classification methods (BI = built-in, OvsO = one-against-one) for various threshold values. For less separable data (such as \emph{letter}) DCSVM's accuracy drops sharply with the threshold (starting at some small threshold value) compared to the accuracy of one-against-one method, which we found to perform better than the built-in method. The built-in and one-against-one methods do not depend on the threshold value, of course. They are shown on the same graph for comparison purpose. However, it is interesting to notice that by increasing the threshold the prediction accuracy of DCSVM on \emph{letter} data sets decreases from a comparable value with one-versus-one method's accuracy (which performs best on this data set) to the accuracy of the built-in method. With a threshold value $\theta = 2\%$ the prediction accuracy of DCSVM is still above the accuracy of the built-in method (for the \emph{letter} data set).

\begin{table}[ht]
\caption{Prediction accuracies for different split thresholds}\label{table:accuracies}
\centering
\begin{tabular}{|r|l|r|r|rrrr|}
  \hline
    &         &    &      &\multicolumn{4}{c|}{DCSVM}\\
 No & Dataset & BI & OvsO & $\theta=2$ & $\theta=1$ & $\theta=0.1$ & $\theta=0.01$ \\
  \hline
1 & artificial1 & 98.85 & 98.76 & 98.70 & 98.70 & 98.76 & 98.76 \\
  2 & iris & 96.97 & 96.97 & 96.97 & 96.97 & 96.97 & 96.97 \\
  3 & segmentation & 27.71 & 27.71 & 29.44 & 29.44 & 29.44 & 29.44 \\
  4 & heart & 58.82 & 58.82 & 59.12 & 59.12 & 59.12 & 59.12 \\
  5 & wine & 96.97 & 96.97 & 96.97 & 96.97 & 96.97 & 96.97 \\
  6 & wine-quality & 62.33 & 62.33 & 62.39 & 62.39 & 62.39 & 62.39 \\
  7 & glass & 87.55 & 91.70 & 91.29 & 91.29 & 91.29 & 91.29 \\
  8 & covertype & 49.95 & 49.95 & 49.95 & 49.95 & 49.95 & 49.95 \\
  9 & svmguide4 & 57.88 & 71.21 & 72.73 & 72.73 & 72.73 & 72.73 \\
  10 & vowel & 94.34 & 97.26 & 97.26 & 97.26 & 97.26 & 97.26 \\
  11 & usps & 94.17 & 93.94 & 93.89 & 94.08 & 94.17 & 94.17 \\
  12 & letter & 95.25 & 96.43 & 95.52 & 96.00 & 96.41 & 96.41 \\
  13 & poker & 55.94 & 55.96 & 55.56 & 55.83 & 55.96 & 55.96 \\
  14 & sensorless & 97.46 & 98.87 & 98.32 & 98.60 & 98.86 & 98.87 \\
   \hline
\end{tabular}
\end{table}
\begin{table}[ht]
\caption{DCSVM: Average number of steps per decision, for different split thresholds}\label{table:steps}
\centering
\begin{tabular}{|r|l|rrrr|}
  \hline
 No & Dataset & $\theta=2$ & $\theta=1$ & $\theta=0.1$ & $\theta=0.01$ \\
  \hline
1 & artificial1 & 3.76 & 3.75 & 3.67 & 3.67 \\
  2 & iris & 1.71 & 1.71 & 1.71 & 1.71 \\
  3 & segmentation & 5.63 & 5.63 & 5.63 & 5.63 \\
  4 & heart & 4.00 & 4.00 & 4.00 & 4.00 \\
  5 & wine & 1.69 & 1.69 & 1.69 & 1.69 \\
  6 & wine-quality & 4.85 & 4.86 & 4.87 & 4.87 \\
  7 & glass & 4.09 & 4.12 & 4.12 & 4.12 \\
  8 & covertype & 5.93 & 5.93 & 5.93 & 5.93 \\
  9 & svmguide4 & 4.63 & 4.88 & 4.88 & 4.88 \\
  10 & vowel & 5.41 & 5.41 & 5.41 & 5.41 \\
  11 & usps & 7.16 & 7.29 & 7.80 & 7.80 \\
  12 & letter & 17.63 & 19.56 & 22.29 & 22.29 \\
  13 & poker & 8.36 & 8.40 & 8.43 & 8.43 \\
  14 & sensorless & 5.44 & 5.49 & 6.28 & 6.93 \\
   \hline
\end{tabular}
\end{table}
\begin{table}[ht]
\caption{DCSVM: Average support vectors per decision, for different split thresholds}\label{table:svs}
\centering
\begin{tabular}{|r|l|rrrr|}
  \hline
No & Dataset & $\theta=2$ & $\theta=1$ & $\theta=0.1$ & $\theta=0.01$ \\
  \hline
1 & artificial1 & 115.17 & 117.29 & 127.22 & 127.22 \\
  2 & iris & 32.47 & 32.47 & 32.47 & 32.47 \\
  3 & segmentation & 305.49 & 305.49 & 305.49 & 305.49 \\
  4 & heart & 270.18 & 270.18 & 270.18 & 270.18 \\
  5 & wine & 66.41 & 66.41 & 66.41 & 66.41 \\
  6 & wine-quality & 1154.49 & 1155.55 & 1157.13 & 1157.13 \\
  7 & glass & 112.14 & 114.37 & 114.37 & 114.37 \\
  8 & covertype & 4528.47 & 4528.47 & 4528.47 & 4528.47 \\
  9 & svmguide4 & 236.58 & 245.71 & 245.71 & 245.71 \\
  10 & vowel & 218.36 & 218.36 & 218.36 & 218.36 \\
  11 & usps & 785.21 & 798.41 & 846.84 & 846.84 \\
  12 & letter & 2822.54 & 3110.42 & 3307.19 & 3307.19 \\
  13 & poker & 22166.49 & 22735.92 & 22803.01 & 22807.62 \\
  14 & sensorless & 2977.89 & 3057.77 & 3404.91 & 3735.30 \\
   \hline
\end{tabular}
\end{table}
Table~\ref{table:accuracies} shows side-by-side accuracies of multi-class classification using (i) built-in (BI), (ii) one-against-one (OvsO), and (iii) DCSVM (for a few threshold values $\theta$). DCSVM performs very well in terms of accuracy (compared to the other methods) for all data sets, for threshold values $\theta \in \{ 2\%, 1\%,0.1\%,0.01\%\}$ (the larger the threshold, the better the accuracy, in general). A larger threshold $\theta$ may increase the prediction speed (Table~\ref{table:steps}) and reduce the computation effort (Table~\ref{table:svs}). Interesting to notice: Table~\ref{table:steps} shows that in all cases displayed in the table the number of decision steps is less than $k-1$, where $k$ is the number of classes in the respective data set. DCSVM outperforms (even for very small threshold) one-against-one and its improvement DAGSVM \cite{platt:99:dags}, which reaches multi-class prediction after $k-1$ steps.


\begin{figure}[ht]
\centering
\includegraphics[scale=0.4]{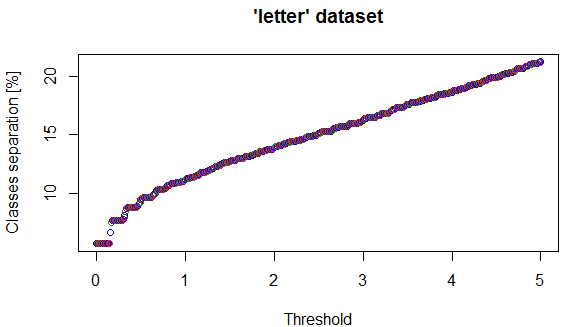}
\includegraphics[scale=0.4]{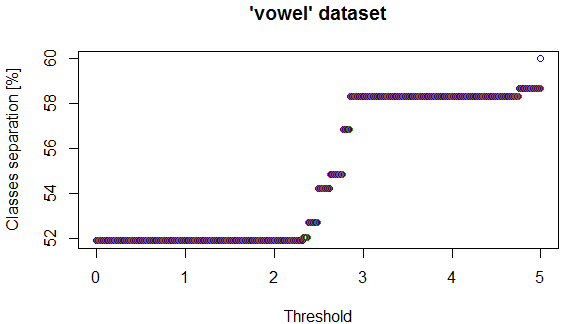}\\[3mm]
\includegraphics[scale=0.4]{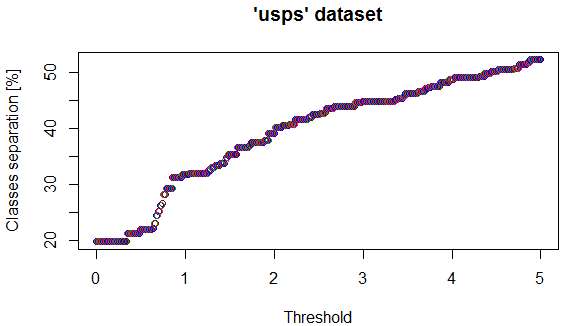}
\includegraphics[scale=0.4]{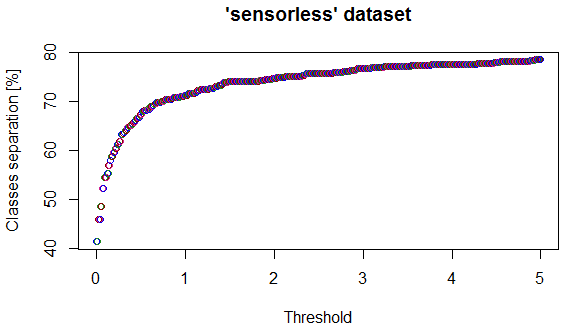}
\caption{Number of separated classes for different thresholds}\label{fig:sepclsvsthshold}
\end{figure}

The \emph{All-Predictions} table in \Cref{fig:allpredict} collects all information used by DCSVM to construct its multi-class prediction strategy (the \emph{dcsvm} decision tree in Algorithm~\ref{alg:dcsvm}). The same information can be used to predict how much separation can be achieved for different threshold values. For instance, for the \emph{glass} data set \emph{All-Predictions} table in \Cref{fig:allpredict} and for a threshold value $\theta = 2\%$ there are 58 entries in the table where the percentage of predicting one class or the other is at least $100 - \theta  = 98\%$ (out of a total of $15\times6 = 80$ entries in the table). The percentage $58/80=72.5\%$ is a good indicator of purity for DCSVM with threshold $\theta = 2\%$: the higher the percentage, the more separation is produced at each step and hence a shallow decision tree.
\Cref{fig:sepclsvsthshold} shows the class separation percentages for threshold values $0\le \theta \le 5$ and four data sets (\emph{letter}, \emph{vowel}, \emph{usps}, and \emph{sensorless}). Intuitively, as threshold increases so does the separation percentage. The \emph{letter} and \emph{usps} data sets display an almost linear increase of separation with threshold. \emph{sensorless} displays a sharp increase for small threshold values, then it tends to flatten, that is, not much gain for significant increase in threshold (and hence possibly less accuracy). Lastly, \emph{vowel} displays a step-like behavior: not much gain in separation until threshold value reaches approx $\theta = 2.3\%$, a steep increase until $\theta$ approaches $3\%$, then nothing much happens again. One can use these indicators to decide the trade-off between speed and accuracy of predictions.

\section{Conclusion}\label{sec:conclusion}
In this paper we present DCSVM, a fast algorithm for multi-class classification using Support Vector Machines. Our method relies on dividing the whole training data set into two partitions that are easily separable by a single binary classifier. Then, a prediction between the two training set partitions would eliminate one or more classes at the time. The algorithm continues recursively until a final binary decision is made between the last two classes left in the competition. Our algorithm performs consistently better than the existent methods on average. In the best case scenario, our algorithm makes a final decision between $k$ classes in $O(\log k)$ decision steps between different partitions of the training data set. In the worst case scenario, DCSVM makes a final decision in $k-1$ steps, which is not worse than the existent techniques.

The SVM divide and conquer technique we present for multi-class classification can be easily used with any binary classifier. It is rather a consequence of increasing data sparsity with the dimensionality of the space, which can be exploited, in general, in favor of producing fast multi-class classification using binary classifiers. Our experimental results on a few popular data sets show the applicability of the method.

%


\bibliographystyle{plain}

\begin{thebibliography}{10}

\bibitem{bellman:57:dynamic}
R.~Bellman and Rand Corporation.
\newblock {\em Dynamic Programming}.
\newblock Rand Corporation research study. Princeton University Press, 1957.

\bibitem{bellman:03:dynamic}
R.E. Bellman.
\newblock {\em Dynamic Programming}.
\newblock Dover Books on Computer Science Series. Dover Publications, 2003.

\bibitem{bishwas:17:allpair}
Arit~Kumar Bishwas, Ashish Mani, and Vasile Palade.
\newblock An all-pair approach for big data multiclass classification with
  quantum {SVM}.
\newblock {\em CoRR}, abs/1704.07664, 2017.

\bibitem{bottou:94:comparison}
Leon Bottou, Corinna Cortes, {J. S.} Denker, Harris Drucker, I.~Guyon, L.D.
  Jackel, Yann Lecun, U.A. Muller, Eduard Sackinger, Patrice Simard, and
  V.~Vapnik.
\newblock Comparison of classifier methods: A case study in handwritten digit
  recognition.
\newblock In IAPR, editor, {\em Proceedings of the International Conference on
  Pattern Recognition, Jerusalem, October 1994}, volume~II, pages 77--82. IEEE,
  1994.

\bibitem{bredensteiner:99:multicategory}
Erin~J. Bredensteiner and Kristin~P. Bennett.
\newblock Multicategory classification by support vector machines.
\newblock {\em Comput. Optim. Appl.}, 12(1-3):53--79, January 1999.

\bibitem{cortes:95:support-vectornetworks}
Corinna Cortes and Vladimir Vapnik.
\newblock Support-vector networks.
\newblock In {\em Machine Learning}, pages 273--297, 1995.

\bibitem{crammer:02:algorithmic}
Koby Crammer and Yoram Singer.
\newblock On the algorithmic implementation of multiclass kernel-based vector
  machines.
\newblock {\em J. Mach. Learn. Res.}, 2:265--292, March 2002.

\bibitem{Dua:2017}
Dua Dheeru and Efi Karra~Taniskidou.
\newblock {UCI} machine learning repository, 2017.

\bibitem{dimitriadou:18:e1071}
Evgenia Dimitriadou, Kurt Hornik, Friedrich Leisch, David Meyer, and Andreas
  Weingessel.
\newblock {e1071: Misc Functions of the Department of Statistics (e1071), TU
  Wien. R package version 1.7-0.}, July 2018.

\bibitem{friedman:96:another}
Jerome~H. Friedman.
\newblock Another approach to polychotomous classification.
\newblock Technical report, Department of Statistics, Stanford University,
  1996.

\bibitem{he:12:simplified}
Xisheng He, Zhe Wang, Cheng Jin, Yingbin Zheng, and Xiangyang Xue.
\newblock A simplified multi-class support vector machine with reduced dual
  optimization.
\newblock 33:71--82, 01 2012.

\bibitem{hsu:02:comparison}
Chih-Wei Hsu and Chih-Jen Lin.
\newblock A comparison of methods for multiclass support vector machines.
\newblock {\em IEEE Transactions on Neural Networks}, 13(2):415--425, Mar 2002.

\bibitem{knerr:90:single}
Stefan Knerr, L\'eon Personnaz, and G\'erard Dreyfus.
\newblock Single-layer learning revisited: A stepwise procedure for building
  and training a neural network.
\newblock In F.~{Fogelman Souli\'e} and J.~H\'erault, editors, {\em
  Neurocomputing: Algorithms, Architectures and Applications}, volume F68 of
  {\em NATO ASI Series}, pages 41--50. Springer-Verlag, 1990.

\bibitem{krebel:99:pairwise}
Ulrich H.-G. Kre{\ss}el.
\newblock Advances in kernel methods.
\newblock chapter Pairwise Classification and Support Vector Machines, pages
  255--268. MIT Press, Cambridge, MA, USA, 1999.

\bibitem{park:07:efficient}
Sang-Hyeun Park and Johannes F{\"u}rnkranz.
\newblock Efficient pairwise classification.
\newblock In Joost~N. Kok, Jacek Koronacki, Raomon Lopez~de Mantaras, Stan
  Matwin, Dunja Mladeni{\v{c}}, and Andrzej Skowron, editors, {\em Machine
  Learning: ECML 2007}, pages 658--665, Berlin, Heidelberg, 2007. Springer
  Berlin Heidelberg.

\bibitem{platt:99:dags}
John~C. Platt, Nello Cristianini, and John Shawe-Taylor.
\newblock Large margin dags for multiclass classification.
\newblock In {\em Proceedings of the 12th International Conference on Neural
  Information Processing Systems}, NIPS'99, pages 547--553, Cambridge, MA, USA,
  1999. MIT Press.

\bibitem{perez:18:mc2esvm}
A.~Rosales-Perez, S.~Garcia, H.~Terashima-Marin, C.~A.~Coello Coello, and
  F.~Herrera.
\newblock Mc2esvm: Multiclass classification based on cooperative evolution of
  support vector machines.
\newblock {\em IEEE Computational Intelligence Magazine}, 13(2):18--29, May
  2018.

\bibitem{palacios:18:multiclass}
Daniel Silva-Palacios, Cèsar Ferri, and María~José Ramírez-Quintana.
\newblock Probabilistic class hierarchies for multiclass classification.
\newblock {\em Journal of Computational Science}, 26:254 -- 263, 2018.

\bibitem{szedmak:04:multiclass}
Sandor Szedmak, John Shawe-Taylor, Craig Saunders, and David Hardoon.
\newblock Multiclass classification by l1 norm support vector machine.
\newblock In {\em Pattern Recognition and Machine Learning in Computer Vision
  Workshop}, 05 2004.

\bibitem{vapnik:98:statistical}
Vladimir Vapnik.
\newblock {\em Statistical Learning Theory}.
\newblock Wiley, New York, 1998.

\bibitem{weston:99:multi}
Watkins-C. Weston, J.
\newblock Multi-class support vector machines.
\newblock In {\em Proceedings of the European Symposium on Artificial Neural
  Networks ESANN99}, pages 219--224, 1999.

\bibitem{xu:17:multi}
Jie Xu, Xianglong Liu, Zhouyuan Huo, Cheng Deng, Feiping Nie, and Heng Huang.
\newblock Multi-class support vector machine via maximizing multi-class
  margins.
\newblock In {\em Proceedings of the 26th International Joint Conference on
  Artificial Intelligence}, pages 3154--3160. AAAI Press, 2017.

\end{thebibliography}

\end{document}